\def\fro{{\scriptscriptstyle \mathrm{F}}}
\def\xb{{\mathbf x}}
\def\ub{{\mathbf u}}
\def\etab{{\boldsymbol\eta}}
\def\oneb{{\mathbf 1}}
\def\zerob{{\mathbf 0}}
\def\Xb{{\mathbf X}}
\def\Yb{{\mathbf Y}}
\def\Real{{\mathbb{R}}}
\newcommand{\SET}[1]{\{1,\cdots,#1\}}
\def\Ocal{\mathcal{O}}
\def\Rcal{\mathcal{R}}
\def\Bcal{\mathcal{B}}
\def\Ccal{\mathcal{C}}
\def\Xcal{\mathcal{X}}
\def\Lcal{\mathcal{L}}
\def\Diag{{\mathrm{Diag}}}
\def\defin{\triangleq}
\newtheorem{theorem}{Theorem}
\newtheorem{lemma}{Lemma}
\newtheorem{corollary}{Corollary}
\begin{document}

\runningtitle{}

\runningauthor{R.Jenatton, J.Huang, C. Archambeau}

\twocolumn[

\aistatstitle{Adaptive Algorithms for Online Convex Optimization \\with Long-term Constraints}

\aistatsauthor{Rodolphe Jenatton$^{*}$ \And Jim Huang$^{\dag}$ \And C\'edric Archambeau$^{*}$}

\aistatsaddress{Amazon, Berlin$^{*}$  \And Amazon, Seattle$^{\dag}$ \And \texttt{\{jenatton,huangjim,cedrica\}@amazon.com}} ]

\begin{abstract}%\vspace{-0.3in}
We present an adaptive online gradient descent algorithm to solve online convex optimization problems with long-term constraints, which are constraints that need to be satisfied when accumulated over a finite number of rounds $T$, but can be violated in intermediate rounds. For some user-defined trade-off parameter $\beta \in (0,1)$, the proposed algorithm achieves cumulative regret bounds of $\Ocal(T^{\max\{\beta,1-\beta\}})$ and $\Ocal(T^{1-\beta/2})$ for the loss and the constraint violations respectively. Our results hold for convex losses and can handle arbitrary convex constraints without requiring knowledge of the number of rounds in advance. Our contributions improve over the best known cumulative regret bounds by Mahdavi,~et~al.~(2012) that are respectively ${\cal O}(T^{1/2})$ and ${\cal O}(T^{3/4})$ for general convex domains, and respectively ${\cal O}(T^{2/3})$ and ${\cal O}(T^{2/3})$ when further restricting to polyhedral domains. We supplement the analysis with experiments validating the performance of our algorithm in practice.\vspace{-0.2in}

\end{abstract}
\section{Introduction\vspace{-0.1in}}
\indent Online convex optimization (OCO) plays a key role in machine learning applications, such as adaptive routing in networks \cite{Awerbuch2008} and online display advertising \cite{Agrawal2015}. In general, an OCO problem can be formulated as a sequential and repeated game between a learner and an adversary. In each round $t$, the learner first plays a vector $\mathbf{x}_t \in {\cal X}  \subseteq \mathbb{R}^p$, where ${\cal X}$ is a compact convex set corresponding to the set of possible solutions. The learner then incurs a loss $f_t(\mathbf{x}_t)$ for playing vector $\mathbf{x}_t$. The function $f_t(\mathbf{x}): {\cal X} \to \mathbb{R}_+$ is defined by the adversary and can vary in each round, but it is assumed to be convex. We say that a function $f_t : \Real^d \mapsto \Real_+$ is {\it strongly convex} with modulus $\sigma>0$ if  
\begin{equation}
f_t(\mathbf{x}) \leq f_t(\mathbf{y}) + \nabla f_t(\mathbf{x})^\top(\mathbf{x}-\mathbf{y}) - \frac{\sigma}{2}\|\mathbf{x}-\mathbf{y}\|^2.\nonumber
\end{equation}
for any $\mathbf{x},\mathbf{y} \in \Real^d$.  We use the notation $\nabla f_t(\xb)$ to refer to any (sub-)gradient of $f_t$ at $\xb$. Furthermore, we say that $f_t$ is {\it convex} if $\sigma=0$.

\indent The learner's objective is to generate a sequence of vectors $\mathbf{x}_t \in {\cal X}$ for $t = 1,2,\cdots,T$ that minimizes the cumulative regret over $T$ rounds relative to the optimal vector $\mathbf{x}^\star$:
\begin{equation}
\!Regret_T(\mathbf{x}^*)\! \defin \!\sum_{t=1}^T f_t(\mathbf{x}_t) - \sum_{t=1}^T f_t(\mathbf{x}^\star). \hspace*{-0.05cm}\label{cumRegret}%\! = \!\sum_{t=1}^T \Delta f_t. \hspace*{-0.05cm}\label{cumRegret}
\end{equation}
The latter measures the difference between the cumulative loss of the learner's sequence of vectors $\{\mathbf{x}_t\}_{t=1}^{T}$ and the accumulated loss that would be incurred if the sequence of loss functions $f_t$ would be known in advance and the learner could choose the best vector $\mathbf{x}^\star$ in hindsight.  

\indent Several algorithms have been developed over the past decade that achieve sub-linear cumulative regret in the OCO setting. The problem was formalized in the seminal work of \cite{Zinkevich2003}, which presents an online algorithm based on projected gradient descent \cite{Bertsekas1989} that guarantees a cumulative regret of ${\cal O}(T^{1/2})$ when the set ${\cal X}$ is convex and the loss functions are Lipschitz-continuous over ${\cal X}$. In \cite{Hazan2007} and \cite{Shalev-Shwartz2009}, algorithms with logarithmic regret bounds were proposed for strongly convex loss functions. Notably, online gradient descent achieves an ${\cal O}(\log T)$ regret bound for strongly convex loss functions for appropriate choices of step size. 

\indent In the aforementioned work, the constraint on vector $\mathbf{x}_t$ is assumed to hold in round $t$, such that a projection step is applied in every round to enforce the feasibility of each $\mathbf{x}_t$.  For general convex sets ${\cal X}$, the projection step may require solving an auxiliary optimization problem, which can be computationally expensive (e.g., projections onto the semi-definite cone). More importantly, in practical applications, the learner may in fact only be concerned with satisfying long-term constraints, that is, the {\it cumulative} constraint violations resulting from the sequence of vectors $\{\mathbf{x}_t\}_{t=1}^{T}$ should not exceed a certain amount by the final round $T$. An example of such an application is in online display advertising, where $\mathbf{x}_t$ corresponds to a vector of ad budget allocations and the learner is primarily concerned in enforcing the long-term constraint that each ad consumes its budget in full over the lifetime of the ad. Another example is from wireless communications \cite{Mannor2006}, where $\mathbf{x}_t$ is a vector of power allocations across multiple devices, and the learner must satisfy average power consumption constraints per device. 

\indent In this work, we consider OCO problems where the learner is required to satisfy long-term constraints. For such problems, we will be concerned both with a) the learner's cumulative regret as defined by \eqref{cumRegret} and b) the learner's ability to satisfy long-term constraints -- the notion of \textit{long-term} shall be made more formal in Section \ref{Sect: OCO}.
This class of problems was studied previously in \cite{Mahdavi2012,Mahdavi2012b}. In particular, \cite{Mahdavi2012b} considered online exponentially-weighted average in the case where the loss and constraints are linear, while \cite{Mahdavi2012} presented online algorithms based on projected subgradients and the mirror prox method \cite{Nemirovski2004}. The authors derived cumulative regret bounds for the cumulative loss and cumulative constraint violations respectively of ${\cal O}(T^{1/2})$ and ${\cal O}(T^{3/4})$ in the case of online projected subgradients, and respectively of ${\cal O}(T^{2/3})$ and ${\cal O}(T^{2/3})$ in the case of mirror prox. To our knowledge, these are the best-known regret bounds for OCO with long-term constraints. The analysis of \cite{Mahdavi2012} assumes the number of rounds is known ahead of time, which enables the authors to set the various constants in the algorithm that lead to the desired regret bounds. For the mirror prox method, it is additionally required that the constraint set ${\cal X}$ is fully specified by a finite number of linear constraints.

\indent The concept of long-term constraints also enables us to avoid the computation of potentially expensive projections onto the domain $\Xcal$ of $\xb$ in each round. This is closely related to recent work in stochastic optimization that aims to minimize the number of expensive projection steps~\cite{Mahdavi2012a}. The guarantees sought in our analysis have also similarities with the results obtained in the context of the online alternating direction method~\cite{Wang2013a}, where regret bounds are provided for the violation of equality constraints.

\textbf{Contributions:} Building on the work of \cite{Mahdavi2012}, we propose an algorithm based on a saddle-point formulation of the OCO problem with long-term constraints, which is adaptive (i.e., the step sizes and the regularisation parameter depend on the round $t$). We show that the algorithm satisfies cumulative regret bounds of ${\cal O}(T^{2/3})$ for both the cumulative loss and the cumulative constraint violations {\it without} requiring us to know the number of rounds $T$ in advance to set the algorithmic hyperparameters. Unlike mirror prox, our method can deal with any convex constraint set ${\cal X}$, making it amenable to a wider range of OCO problems. Also, the algorithm we derive allows us to interpolate between the regret bounds of ${\cal O}(T^{1/2})$ and ${\cal O}(T^{3/4})$ from \cite{Mahdavi2012} and the above bound of ${\cal O}(T^{2/3})$, depending on how we wish to trade off between the cumulative regret associated to the loss and the long-term constraints. In addition to our analysis of regret bounds, we empirically validate our algorithm by comparing it to the methods of \cite{Mahdavi2012} on a) the online estimation of doubly stochastic matrices and b) the online learning of sparse logistic regression based on the elastic net penalty~\cite{Zou2005}. \vspace{-0.1in}

\section{Online Convex Optimization with Long-term Constraints\vspace{-0.05in}}

\subsection{Preliminaries\vspace{-0.1in}}

\indent Consider $m$ convex functions $g_j : \Real^d \mapsto \Real$ which induce a convex constraint set\vspace{-0.05in}
$$
\Xcal \defin \Big\{  \xb \in \Real^d :   \max_{j \in \SET{m}} g_j(\xb) \leq 0 \Big\}.
$$
We assume that the set $\Xcal$ is bounded so that it is included in some Euclidean ball $\Bcal$ with radius $R >0$ (to be further discussed in Section~\ref{sec:discussion}):\vspace{-0.01in}
$$
\Xcal \subseteq \Bcal \defin \Big\{  \xb \in \Real^d :   \| \xb \|_2 \leq R \Big\}.
$$
Along with the functions $g_j$, we consider a sequence of convex functions $f_t :  \Real^d \mapsto \Real_+$ such that  
$$
\max_{t\in\SET{T}}\max_{\xb,\xb' \in \Bcal} |f_t(\xb) - f_t(\xb')| \leq F\,\text{ for some }F > 0.
$$
\indent  As is typically assumed in online learning~\cite{Cesa-Bianchi2006}, the functions $g_j$ and $f_t$ shall be taken to be Lipschitz continuous. We do not generally assume $g_j$ and $f_t$ to be differentiable. We further assume that for some finite $G > 0$ the (sub-)gradients of $f$ and $g_j$ are bounded
\begin{align}
\max_{j \in \SET{m}} \max_{\xb \in \Bcal}  \| \nabla g_j(\xb) \|_2 \leq G,\nonumber\\
\max_{t\in \SET{T}} \max_{\xb \in \Bcal}  \| \nabla f_t(\xb) \|_2 \leq G.\nonumber
\end{align} 
We take the same constant $G$ for, both, $g_j$ and $f_t$ for simplicity as we can always take the maximum between that of $g_j$ and that of $f_t$. Finally, we assume that there exists a finite $D>0$ such that the constraint functions are bounded over $\Bcal$:
$$
\max_{j \in \SET{m}} \max_{\xb \in \Bcal} |g_j(\xb)| \leq D.
$$
Finally, we note that the set of assumptions enumerated in this section are equivalent to the ones in~\cite{Mahdavi2012}.
\vspace{-0.1in}

\subsection{Problem Statement\vspace{-0.1in}}\label{Sect: OCO}
\indent Let $\{\xb_t\}_{t=1}^T$ be the sequence of vectors played by the learner and $\{f_t(\xb_t)\}_{t=1}^T$ the corresponding sequence of incurred losses. We aim at efficiently solving the following optimization problem in online fashion:\vspace{-0.05in}
\begin{align*}
&\min_{\xb_T\in\Bcal} \Big\{ f_T(\xb_T) + \dots \\
&+ \min_{\xb_2\in\Bcal} \Big\{ f_2(\xb_2) + \min_{\xb_1\in\Bcal} f_1(\xb_1) \Big\} \Big\} 
-
\min_{\xb \in \Xcal} \sum_{t=1}^T f_t(\xb)
\end{align*}
subject to the following \textit{long-term constraint}:\vspace{-0.05in}
$$
\max_{j \in \SET{m}} \sum_{t=1}^T g_j(\xb_t) \leq 0 .
$$

\section{Adaptive Online Algorithms based on a Saddle-point Formulation\vspace{-0.05in}}\label{Sect: Saddle}
\indent Following~\cite{Mahdavi2012}, we consider a saddle-point formulation of the optimization problem. For any $ \lambda \in \Real^+, \xb \in \Bcal$ we define the following function:
\begin{equation}\label{eq:lagrangianL2}
\Lcal_t(\xb,\lambda) \defin 
f_t(\xb) + \lambda g(\xb) - \frac{\theta_t}{2} \lambda^2 \nonumber
\end{equation}
where $g(\xb) \defin \max_{j\in\SET{m}} g_j(\xb)$ and $\{\theta_t\}_{t=1}^T$ is a sequence of positive numbers to be specified later. The role of $g$ is to aggregate the $m$ constraints into a single function. It otherwise preserves the same properties as those of individual $g_j$'s (sub-differentiability, bounded (sub-)gradients and bounded values; see Proposition 6 in \cite{Mahdavi2012} or Section 2.3 in \cite{Borwein2006} for a proof).
\indent In the saddle-point formulation~(\ref{eq:lagrangianL2}), we will alternate between minimizing with respect to the primal variable $\xb$ and maximizing with respect to the dual parameter $\lambda$.~A closer look at the function $\lambda \mapsto \Lcal_t(\xb,\lambda)$ indicates that we penalize the violation of the constraint $g(\xb) \leq 0$ through, using $[u]_+ \defin \max\{0,u\}$,  
\begin{eqnarray}\label{eq:squared_penalty}
\frac{1}{2 \theta_t}  [  g(\xb)  ]_+^2 = \sup_{\lambda \in \Real_+} \Big[   \lambda g(\xb) - \frac{\theta_t}{2} \lambda^2   \Big],
\end{eqnarray}
which is a penalty commonly used when going from constrained to unconstrained problems~(e.g., Section 17.1 in \cite{Nocedal2006}). Also, we can see from~(\ref{eq:squared_penalty}) that $\theta_t$ acts as a regularization parameter. We note at this point that in contrast to our method, \cite{Mahdavi2012} make use of a single $\theta$ that is constant in all rounds.\footnote{More precisely, in~\cite{Mahdavi2012}, $\theta$ is equal to the product of a constant step size times a constant scaling factor.}

\indent In the sequel, we study the following online algorithm where we alternate between primal-descent and dual-ascent steps:\vspace{-0.05in}
\begin{itemize}
    \setlength{\itemsep}{0pt}
    \setlength{\parskip}{0pt}
    \setlength{\parsep}{0pt} 
\item Initialize $\xb_1 = \zerob$ and $\lambda_1 = 0$
\item For $t \in \SET{T-1}$:
	\begin{itemize}
	\item[] $\xb_{t+1} = \Pi_{\Bcal}(  \xb_t - \eta_t  \nabla_\xb \Lcal_t(\xb_t,\lambda_t) )$
	\item[] $\lambda_{t+1} = \Pi_{\Real^+}(  \lambda_t + \mu_t  \nabla_\lambda \Lcal_t(\xb_t,\lambda_t) )$,
	\end{itemize}
\end{itemize} where  $\Pi_{\Ccal}$ stands for the Euclidean projection onto the set $\Ccal$, while $\{\eta_t\}_{t=1}^T$ and $\{\mu_t\}_{t=1}^T$ are sequences of non-negative step sizes that respectively drive the update of $\xb$ and $\lambda$. The algorithm is in the same vein as the ones proposed in~\cite{Mahdavi2012,Koppel2014}, but it is adaptive. The step sizes, which are different for the updates of $\xb$ and $\lambda$, are listed in Table~\ref{tab:params}.  They result from the analysis we provide in the next section. We also derive sub-linear regret bounds associated to these instantiations of the sequences $\{\theta_t\}_{t=1}^T$, $\{\eta_t\}_{t=1}^T$ and $\{\mu_t\}_{t=1}^T$.

\subsection{Main Results\vspace{-0.1in}}\label{Sect: main results}
\indent We begin by listing three sufficient conditions for obtaining sub-linear regret bounds for the proposed algorithm:\vspace{-0.1in}
\begin{itemize}[leftmargin=0.5in]
\item[\textbf{(C1):}] For any $t \geq 2$, $\frac{1}{\mu_t} -  \frac{1}{\mu_{t-1}} -  \theta_t \leq 0.$
\;\item[\textbf{(C2):}] For any $t \geq 2$, $\eta_t G^2 + \mu_t \theta_t^2 - \frac{1}{2} \theta_t \leq 0.$
\;\item[\textbf{(C3):}] For some finite $U_\eta > 0$, \\$\sum_{t=2}^T \Big[\frac{1}{\eta_t} -  \frac{1}{\eta_{t-1}} -  \sigma \Big] \leq U_\eta.$
\end{itemize}
Conditions \textbf{C1} and \textbf{C3} impose constraints on the decreasing speed of the step sizes.  We note that there is an asymmetry between $\mu_t$ and $\eta_t$: while we will always be able to control the norm of the variables $\xb_t$ (by design, they must lie in $\Bcal$), the sequence $\{\lambda_t\}_{t=1}^T$ is not directly upper-bounded in the absence of further assumptions on the gradient of $g$, hence the most stringent condition \textbf{C1} is to avoid any dependencies on $\lambda_t$.
Condition \textbf{C2} couples the behaviour of the three sequences to guarantee their validity. Finally, \textbf{C1, C2} and \textbf{ C3} are expressed for $t \geq 2$ because of our choice for the initial conditions $\xb_1=\zerob$ and $\lambda_1 = 0$. 

\indent Our main result is described in the following theorem, whose proof is in Section \ref{sec:analysis}:
\begin{theorem}\label{thm1}
Consider the convex case  ($\sigma = 0$) and the corresponding instantiations of the sequences $\mu_t, \eta_t$ and $\theta_t$ for some $\beta \in (0,1)$, as summarized in Table~\ref{tab:params}.
It holds for any $T \geq 1$ that 
$$
\sum_{t=1}^T \Delta f_t \! \leq \!
\Rcal_{T}^f  \!\defin\! 
 \!
\bigg[  RG + \frac{D^2}{6\beta RG} \bigg] T^\beta + \frac{2RG}{1-\beta} T^{1-\beta}
\!
$$
where $\Delta f_t \defin f_t(\xb_t) - f_t(\xb^\star)$, and
$$
\sum_{t=1}^T g(\xb_t) 
\leq 
\sqrt{
\frac{24RG}{1-\beta} \Big(\Rcal_{T}^f+ F T\Big) T^{1-\beta}
}.
$$
\end{theorem}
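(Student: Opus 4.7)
The plan is to run a standard primal–dual (saddle-point) online analysis on $\Lcal_t$, exploiting that $\Lcal_t(\cdot,\lambda)$ is convex in $\xb$ and $\Lcal_t(\xb,\cdot)$ is $\theta_t$-strongly concave in $\lambda$, but with careful bookkeeping so that the resulting bound contains no dependence on the dual iterates $\lambda_t$, which are not a priori bounded.

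First, I would write the standard one-step inequalities coming from the non-expansiveness of the Euclidean projection combined with (strong) convexity/concavity of $\Lcal_t$:
\begin{align*}
\Lcal_t(\xb_t,\lambda_t) - \Lcal_t(\xb^\star,\lambda_t) & \leq \tfrac{1}{2\eta_t}\bigl[\|\xb_t-\xb^\star\|^2 - \|\xb_{t+1}-\xb^\star\|^2\bigr] + \tfrac{\eta_t}{2}\|\nabla_\xb \Lcal_t\|^2 - \tfrac{\sigma}{2}\|\xb_t-\xb^\star\|^2,\\
\Lcal_t(\xb_t,\lambda) - \Lcal_t(\xb_t,\lambda_t) & \leq \tfrac{1}{2\mu_t}\bigl[(\lambda-\lambda_t)^2 - (\lambda-\lambda_{t+1})^2\bigr] + \tfrac{\mu_t}{2}\|\nabla_\lambda \Lcal_t\|^2 - \tfrac{\theta_t}{2}(\lambda-\lambda_t)^2.
\end{align*}
Summing these over $t$, adding them, and Abel-summing the telescoping brackets yields a bound on $\sum_{t=1}^T [\Lcal_t(\xb_t,\lambda) - \Lcal_t(\xb^\star,\lambda_t)]$ valid for every $\lambda \geq 0$. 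The crux is to bound the gradient norms via
$$\|\nabla_\xb \Lcal_t\|^2 \leq 2G^2 + 2G^2 \lambda_t^2, \qquad \|\nabla_\lambda \Lcal_t\|^2 = (g(\xb_t)-\theta_t\lambda_t)^2 \leq 2D^2 + 2\theta_t^2 \lambda_t^2,$$
and then to observe that the net coefficient of $\lambda_t^2$ after summation is exactly $\eta_t G^2 + \mu_t \theta_t^2 - \tfrac{1}{2}\theta_t$, which is non-positive by \textbf{C2}. Likewise, the Abel residual from the dual telescoping combined with the $-\tfrac{\theta_t}{2}(\lambda-\lambda_t)^2$ strong-concavity terms produces the coefficient $\tfrac{1}{\mu_t} - \tfrac{1}{\mu_{t-1}} - \theta_t \leq 0$ on $(\lambda-\lambda_t)^2$, which is \textbf{C1}. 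Condition \textbf{C3} plays the analogous role on the primal side, since $\|\xb_t-\xb^\star\|^2 \leq 4R^2$ lets us bound $\sum_{t=2}^T \|\xb_t-\xb^\star\|^2\bigl[\tfrac{1}{\eta_t}-\tfrac{1}{\eta_{t-1}}-\sigma\bigr] \leq 4R^2 U_\eta$. After these cancellations I expect to land on an inequality of the form
$$\sum_{t=1}^T \Delta f_t + \lambda \sum_{t=1}^T g(\xb_t) - \tfrac{\lambda^2}{2}\sum_{t=1}^T \theta_t \leq B_T,$$
where I also used $-\lambda_t g(\xb^\star) \geq 0$ (feasibility of $\xb^\star$ together with $\lambda_t \geq 0$) to drop one further non-negative term, and $B_T$ is an explicit deterministic quantity involving $\eta_T$, $R^2$, $G^2$, $D^2$. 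Plugging in the Table~\ref{tab:params} schedules and summing the arising geometric-type series should produce $B_T = \Rcal_T^f$.

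Finally, I would specialise $\lambda$. Setting $\lambda = 0$ immediately gives the loss regret bound $\sum_t \Delta f_t \leq \Rcal_T^f$. For the constraint-violation bound, the right-hand side of the displayed inequality is a concave quadratic in $\lambda \geq 0$, so I maximise it at $\lambda^\star = [\sum_t g(\xb_t)]_+ / \sum_t \theta_t$, rearrange, and use the crude lower bound $\sum_t \Delta f_t \geq -FT$ coming from $|f_t(\xb_t) - f_t(\xb^\star)| \leq F$. This gives
$$\tfrac{\bigl[\sum_t g(\xb_t)\bigr]_+^2}{2\sum_t \theta_t} \leq \Rcal_T^f + FT,$$
and substituting the explicit expression for $\theta_t$ (so that $\sum_{t=1}^T \theta_t$ is of order $T^{1-\beta}/(1-\beta)$) yields the stated square-root bound on $\sum_t g(\xb_t)$.

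The main obstacle I anticipate is not conceptual but algebraic: checking that the Table~\ref{tab:params} step sizes and regularisers genuinely satisfy \textbf{C1}–\textbf{C3} with the tight constants, and then that the accumulated telescoping/residual terms collapse precisely to the form $\Rcal_T^f$ written in the statement. The conceptual heart of the argument is the $\lambda_t^2$-absorption trick enabled by \textbf{C1}–\textbf{C2}: without it, the unbounded dual iterates would propagate through every gradient-norm term and destroy sublinear regret.
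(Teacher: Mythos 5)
Your proposal is correct and follows essentially the same route as the paper: the same primal--dual one-step inequalities, the same gradient-norm bounds $2G^2(1+\lambda_t^2)$ and $2(D^2+\theta_t^2\lambda_t^2)$, the same absorption of the $\lambda_t^2$ and $(\lambda-\lambda_t)^2$ terms via \textbf{C1}--\textbf{C2}, and the same final maximization over $\lambda$ combined with $\sum_t \Delta f_t \geq -FT$ (this is exactly Lemmas~\ref{lmm1}--\ref{lmm5}). The only minor bookkeeping difference is that the dual quadratic's coefficient is $S_\theta+\delta_\mu$ rather than $S_\theta$ alone (the $b_1=\lambda^2$ boundary term contributes $\delta_\mu\lambda^2/2$), which is where the constant $24$ rather than $12$ in the stated bound comes from.
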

For the strongly convex case  ($\sigma>0$), we also have valid instantiations of $\eta_t, \mu_t, \theta_t$ (see Table~\ref{tab:params}), whose resulting cumulative regret bounds are tighter than those given in Theorem \ref{thm1}, but with the same leading terms.
Theorem \ref{thm1} can be stated in a simplified form, forgetting momentarily about the dependencies on $\{D,G,R,F\}$:
\begin{eqnarray*}
&\sum_{t=1}^T \Delta f_t \leq \Ocal(\max\{ T^{\beta},T^{1-\beta} \}),\\
&\sum_{t=1}^T g(\xb_t)  \leq \Ocal(T^{1-\beta/2}).
\end{eqnarray*} In particular, by setting $\beta = 2/3$, we obtain 
$$
\sum_{t=1}^T \Delta f_t \leq \Ocal(T^{2/3})\text{  and  }
 \sum_{t=1}^T g(\xb_t)  \leq \Ocal(T^{2/3}),
$$
which matches the mirror prox guarantees of~\cite{Mahdavi2012} while being valid for general convex constraint sets $\Xcal$ as opposed to just polyhedral constraint sets. Similarly, taking $\beta = 1/2$, we recover the regret bounds
$$
\sum_{t=1}^T \Delta f_t \leq \Ocal(T^{1/2})\text{  and  }
\sum_{t=1}^T g(\xb_t)  \leq \Ocal(T^{3/4}),
$$
of Section 3.1 in~\cite{Mahdavi2012}. We can, of course, also define novel trade-offs between loss and constraint violations, e.g., $\beta = 3/4$ with regret bounds of $\Ocal(T^{3/4})$ and $\Ocal(T^{5/8})$ respectively.
\begin{table}[]
\centering
\begin{tabular}{c|c|c}
 & Convex ($\sigma=0$) & Strongly convex ($\sigma>0$) \\ \hline\hline
$\theta_t$ &  $\frac{6RG}{t^\beta}$ & $\frac{6G^2}{\sigma t^\beta}$ \\
$\eta_t$ &  $\frac{R}{G t^\beta}$ & $\frac{1}{\sigma t}$  \\
 $\mu_t$ &  $\frac{1}{\theta_t (t + 1)}$ & $\frac{1}{\theta_t (t + 1)}$ \\\hline
$S_\theta$ &  $\frac{6RG}{1-  \beta} T^{1-\beta}$ &  $\frac{6G^2}{\sigma (1-\beta)} T^{1-\beta}$ \\
$S_\eta$ &  $\frac{R}{G (1-\beta)} T^{1-\beta}$ &  $\frac{1}{\sigma} (1+\log(T))$  \\
$S_\mu$ &  $\frac{1}{6  \beta  R G} T^\beta$ & $\frac{\sigma}{6  \beta  G^2} T^\beta$    \\
$U_\eta$ & $\frac{G}{R} T^\beta$  &  0\\
$\frac{1}{\mu_1}-\theta_1$ & $6RG$ & $\frac{6G^2}{\sigma}$  \\
$\frac{1}{\eta_1}-\sigma$ & $\frac{G}{R}$ & 0 
\end{tabular}
\caption{Parameter instantiations in different regimes ($\beta$ denotes some number in $(0,1)$).}
\label{tab:params}
\end{table}
\subsection{Analysis and Proofs}\label{sec:analysis}
\indent To analyze the above algorithm, we first introduce a series of lemmas. The analysis is analogous to that developed in~\cite{Mahdavi2012}, which we provide below for self-containedness. We begin by upper-bounding the variations of $\Lcal_t$ with respect to its two arguments. In particular, the following lemma takes advantage of the fact that the partial function $\lambda \mapsto \Lcal_t(\xb_t,\lambda)$ is not only concave as considered in~\cite{Mahdavi2012}, but \textit{strongly} concave with parameter $\theta_t$. This observation is at the origin of our improved regret bounds.
\begin{lemma}[Upper bound of $\Lcal_t(\xb_t,\lambda) - \Lcal_t(\xb_t,\lambda_t)$]\label{lmm1} For $\Lcal_t(\xb,\lambda)$ as defined above and for non-negative $\eta_t, \theta_t$ and $\mu_t$, we have
\begin{eqnarray*}
\Lcal_t(\xb_t,\lambda) - \Lcal_t(\xb_t,\lambda_t) \leq &\frac{1}{2\mu_t} \big[ b_t -   b_{t+1} \big] - \frac{\theta_t}{2} b_t \\ &+ \frac{\mu_t}{2} [ \nabla_\lambda \Lcal_t(\xb_t,\lambda_t) ]^2
\end{eqnarray*}
where $ b_t \defin ( \lambda - \lambda_t )^2$.
\end{lemma}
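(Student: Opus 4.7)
The plan is to combine two standard ingredients: (i) the strong concavity of the partial function $\lambda \mapsto \Lcal_t(\xb_t,\lambda)$, which holds with modulus $\theta_t$ because $\Lcal_t(\xb_t,\lambda) = f_t(\xb_t) + \lambda g(\xb_t) - \tfrac{\theta_t}{2}\lambda^2$, and (ii) the non-expansiveness of the Euclidean projection $\Pi_{\Real^+}$ applied to the dual-ascent update $\lambda_{t+1} = \Pi_{\Real^+}(\lambda_t + \mu_t \nabla_\lambda \Lcal_t(\xb_t,\lambda_t))$. The target inequality exactly has the ``projected gradient telescoping'' shape of $\tfrac{1}{2\mu_t}[b_t - b_{t+1}]$ plus a $\tfrac{\mu_t}{2}\|\cdot\|^2$ term, augmented by the $-\tfrac{\theta_t}{2} b_t$ bonus coming from strong concavity; this suggests deriving the two pieces separately and adding them.

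First, I would write the strong-concavity inequality in one line: for any $\lambda \in \Real^+$,
\[
\Lcal_t(\xb_t,\lambda) - \Lcal_t(\xb_t,\lambda_t) \leq \nabla_\lambda \Lcal_t(\xb_t,\lambda_t)\,(\lambda-\lambda_t) - \frac{\theta_t}{2}(\lambda-\lambda_t)^2,
\]
which is just the upper version of the strong-convexity inequality stated in the introduction, applied to $-\Lcal_t(\xb_t,\cdot)$. Since $b_t = (\lambda-\lambda_t)^2$, the quadratic term is already $-\tfrac{\theta_t}{2}b_t$.

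Second, I would control the linear term $\nabla_\lambda \Lcal_t(\xb_t,\lambda_t)(\lambda-\lambda_t)$ using the update rule. Because $\lambda \in \Real^+$ and projections onto convex sets are non-expansive,
\[
(\lambda_{t+1}-\lambda)^2 \leq \bigl(\lambda_t + \mu_t\nabla_\lambda \Lcal_t(\xb_t,\lambda_t) - \lambda\bigr)^2.
\]
Expanding the right-hand side and rearranging gives
\[
\nabla_\lambda \Lcal_t(\xb_t,\lambda_t)\,(\lambda-\lambda_t) \leq \frac{1}{2\mu_t}\bigl[b_t - b_{t+1}\bigr] + \frac{\mu_t}{2}\bigl[\nabla_\lambda \Lcal_t(\xb_t,\lambda_t)\bigr]^2,
\]
which is the remaining two terms of the claimed bound. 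Summing the two displayed inequalities yields the lemma.

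There is no genuine obstacle here, only bookkeeping; the main point to get right is the direction of the strong-concavity inequality (one needs the upper bound on $\Lcal_t(\xb_t,\lambda)$, which follows because the $-\tfrac{\theta_t}{2}\lambda^2$ part is concave) and the fact that the non-expansiveness step requires $\lambda \geq 0$ so that $\Pi_{\Real^+}\lambda = \lambda$. Everything else is elementary algebraic manipulation with the definition $b_t = (\lambda-\lambda_t)^2$.
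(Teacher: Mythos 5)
Your proof is correct and follows essentially the same route as the paper's: both arguments combine the non-expansiveness of $\Pi_{\Real^+}$ applied to the dual update with the $\theta_t$-strong concavity of $\lambda \mapsto \Lcal_t(\xb_t,\lambda)$, differing only in the order in which the two inequalities are chained. Your explicit remark that the projection step requires $\lambda \geq 0$ is a point the paper leaves implicit, but otherwise the content is identical.
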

\begin{proof}
Expanding $(\lambda - \lambda_{t+1})^2$ yields
\begin{align}
(\lambda - \lambda_{t+1})^2 &= \Big(\lambda - \Pi_{\mathbb{R}^+}(\lambda_{t} + \mu_t \nabla_\lambda {\cal L}_t (\mathbf{x}_t,\lambda_t))\Big)^2 \nonumber\\ &\leq \Big(\lambda - (\lambda_{t} + \mu_t \nabla_\lambda {\cal L}_t (\mathbf{x}_t,\lambda_t))\Big)^2\nonumber \\&=(\lambda - \lambda_{t})^2 -2\mu_t (\lambda - \lambda_{t})\nabla_\lambda {\cal L}_t (\mathbf{x}_t,\lambda_t) \nonumber \\ &+ \mu_t^2 (\nabla_\lambda {\cal L}_t (\mathbf{x}_t,\lambda_t))^2\nonumber
\end{align} By strong concavity of  ${\cal L}_t(\mathbf{x}_t,\lambda)$ with respect to $\lambda$, 
\begin{align}
{\cal L}_t(\mathbf{x}_t,\lambda_t) - {\cal L}_t(\hat{\mathbf{x}}_t,\lambda) &\leq  (\lambda - \lambda_{t})\nabla_\lambda {\cal L}_t (\mathbf{x}_t,\lambda_t) - \frac{\theta_t}{2}b_t. \nonumber
\end{align}
Substituting the inequality for $\mu_t (\lambda - \lambda_{t})\nabla_\lambda {\cal L}_t (\mathbf{x}_t,\lambda_t)$ completes the proof.
\end{proof}
We omit the derivation for $\xb \mapsto \Lcal_t(\xb,\lambda_t)$ that follows similar arguments. We now turn to a lower-bound of the variations of $\Lcal_t$.
\begin{lemma}\label{lmm2} Let $\xb^\star=\arg \min_{\xb \in \Xcal} \sum_{t=1}^T f_t(\xb)$. Then 
\begin{eqnarray*}
\sum_{t=1}^T \Lcal_t(\xb_t,\lambda) - \Lcal_t(\xb^\star,\lambda_t) \geq \\
\sum_{t=1}^T \Delta f_t +\lambda \sum_{t=1}^T g(\xb_t) \nonumber - \frac{\lambda^2}{2} \sum_{t=1}^T \theta_t
+\frac{1}{2} \sum_{t=1}^T \theta_t \lambda_t^2.
\end{eqnarray*}
\end{lemma}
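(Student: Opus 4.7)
The plan is to expand both sides using the definition $\Lcal_t(\xb,\lambda) = f_t(\xb) + \lambda g(\xb) - \tfrac{\theta_t}{2}\lambda^2$ and then identify the single term that needs to be controlled by a sign argument based on feasibility of $\xb^\star$ and nonnegativity of $\lambda_t$.

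First I would simply substitute the definition of $\Lcal_t$ into each term of the sum:
\begin{align*}
\Lcal_t(\xb_t,\lambda) &= f_t(\xb_t) + \lambda g(\xb_t) - \tfrac{\theta_t}{2}\lambda^2, \\
\Lcal_t(\xb^\star,\lambda_t) &= f_t(\xb^\star) + \lambda_t g(\xb^\star) - \tfrac{\theta_t}{2}\lambda_t^2.
\end{align*}
Taking the difference and summing over $t$ gives
\[
\sum_{t=1}^T \bigl[\Lcal_t(\xb_t,\lambda) - \Lcal_t(\xb^\star,\lambda_t)\bigr] = \sum_{t=1}^T \Delta f_t + \lambda \sum_{t=1}^T g(\xb_t) - \sum_{t=1}^T \lambda_t g(\xb^\star) - \tfrac{\lambda^2}{2}\sum_{t=1}^T \theta_t + \tfrac{1}{2}\sum_{t=1}^T \theta_t \lambda_t^2.
\]
Comparing this identity against the claim of the lemma, the only term that does not appear on the right-hand side of the claim is $-\sum_{t=1}^T \lambda_t g(\xb^\star)$, so all that remains is to argue that this term is nonnegative.

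This is the one substantive observation and I don't expect it to be difficult. Since $\xb^\star \in \Xcal$, by definition of $\Xcal$ we have $g_j(\xb^\star) \leq 0$ for every $j$, whence $g(\xb^\star) = \max_{j} g_j(\xb^\star) \leq 0$. The dual iterates $\lambda_t$ are produced by the projection $\Pi_{\Real^+}$ and so satisfy $\lambda_t \geq 0$ for every $t$ (the initialization $\lambda_1=0$ is also nonnegative). Therefore $\lambda_t\, g(\xb^\star) \leq 0$ for every $t$, and hence $-\sum_{t=1}^T \lambda_t g(\xb^\star) \geq 0$. Dropping this nonnegative quantity from the identity yields exactly the stated inequality, concluding the proof. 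There is no real obstacle here; the lemma is essentially an algebraic rearrangement combined with one sign check based on feasibility of $\xb^\star$ and nonnegativity of the dual iterates.
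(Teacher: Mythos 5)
Your proof is correct and follows exactly the paper's argument: expand the definition of $\Lcal_t$, isolate the extra term $-\sum_{t=1}^T \lambda_t g(\xb^\star)$, and drop it using $g(\xb^\star)\leq 0$ (feasibility of $\xb^\star$) together with $\lambda_t \geq 0$. You are in fact slightly more careful than the paper in spelling out why the $\lambda_t$ are nonnegative (projection onto $\Real^+$ and the initialization $\lambda_1=0$).
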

\begin{proof} We have ${\Lcal}_t(\mathbf{x}_t,\lambda) - {\Lcal}_t(\mathbf{x}^\star,\lambda_t)$ equal to
\begin{eqnarray*}
f_t(\mathbf{x}_t) - f_t(\mathbf{x}^\star) + \lambda g(\mathbf{x}_t) \nonumber - \lambda_t g(\mathbf{x}^\star) - \frac{\theta_t}{2}(\lambda^2-\lambda_t^2) .
\end{eqnarray*} 
We simply notice that $g(\xb^\star) \leq 0$ to obtain a lower bound $- g(\xb^\star) \sum_{t=1}^T \lambda_t$ to complete the proof after summing both sides over rounds $t=1,\cdots,T$.
\end{proof} 
\begin{lemma}\label{lmm3} Let $a_t \defin \|\mathbf{x}_t-\mathbf{x}\|^2$. For any $\sigma, \theta_t \geq 0$, \vspace{-0.05in}
\begin{eqnarray*}
\sum_{t=1}^T \frac{1}{2\eta_t} \big[ a_t -   a_{t+1} \big] - \frac{\sigma}{2} a_t
\leq \\
\frac{R^2}{2} \delta_\eta
+ 
\frac{1}{2} \sum_{t=2}^T a_t  \bigg[ \frac{1}{\eta_t} -  \frac{1}{\eta_{t-1}} -  \sigma \bigg],
\\
\sum_{t=1}^T \frac{1}{2\mu_t} \big[ b_t -   b_{t+1} \big] - \frac{\theta_t}{2} b_t
\leq \\
\frac{\lambda^2}{2}  \delta_\mu
+
\frac{1}{2} \sum_{t=2}^T b_t  \bigg[ \frac{1}{\mu_t} -  \frac{1}{\mu_{t-1}} -  \theta_t \bigg]
\end{eqnarray*}
where we have used $\delta_\eta \defin \frac{1}{\eta_1} -   \sigma$ and $\delta_\mu \defin \frac{1}{\mu_1} -   \theta_1$.
\end{lemma}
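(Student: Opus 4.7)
The plan is to prove both inequalities by the same elementary device, namely Abel summation (summation by parts), applied to the telescoping terms $a_t - a_{t+1}$ and $b_t - b_{t+1}$, and then to use the initial conditions $\xb_1 = \zerob$ and $\lambda_1 = 0$ to convert the leftover boundary term into the stated multiples of $R^2$ and $\lambda^2$. Since the two statements are structurally identical (the second is obtained from the first by the substitution $a_t \mapsto b_t$, $\eta_t \mapsto \mu_t$, $\sigma \mapsto \theta_t$), I would prove the first in detail and then remark that the second follows by the same argument.

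For the first inequality, I would start by splitting
\begin{equation*}
\sum_{t=1}^T \frac{1}{2\eta_t}\bigl[a_t - a_{t+1}\bigr]
= \sum_{t=1}^T \frac{a_t}{2\eta_t} - \sum_{t=1}^T \frac{a_{t+1}}{2\eta_t},
\end{equation*}
and then shift the index in the second sum to $t \mapsto t-1$, which produces
\begin{equation*}
\sum_{t=1}^T \frac{1}{2\eta_t}\bigl[a_t - a_{t+1}\bigr]
= \frac{a_1}{2\eta_1} - \frac{a_{T+1}}{2\eta_T}
+ \sum_{t=2}^T a_t\Bigl[\frac{1}{2\eta_t} - \frac{1}{2\eta_{t-1}}\Bigr].
\end{equation*}
The term $-a_{T+1}/(2\eta_T)$ is non-positive (since $\eta_T \geq 0$ and $a_{T+1} \geq 0$), so dropping it preserves the inequality. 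Subtracting $\sum_{t=1}^T \frac{\sigma}{2} a_t$ from both sides and peeling off the $t=1$ term then gives
\begin{equation*}
\sum_{t=1}^T \frac{1}{2\eta_t}\bigl[a_t - a_{t+1}\bigr] - \frac{\sigma}{2} a_t
\leq \frac{a_1}{2}\Bigl[\frac{1}{\eta_1} - \sigma\Bigr]
+ \frac{1}{2}\sum_{t=2}^T a_t\Bigl[\frac{1}{\eta_t} - \frac{1}{\eta_{t-1}} - \sigma\Bigr].
\end{equation*}

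To finish, I would use the initial condition $\xb_1 = \zerob$ and the fact that the point $\xb$ implicitly appearing in the definition $a_t = \|\xb_t - \xb\|^2$ lies in $\Bcal$, so $a_1 = \|\xb\|^2 \leq R^2$. This bounds the boundary term by $\tfrac{R^2}{2}\delta_\eta$ and yields the first claim. The same Abel-summation argument applied to the sequence $b_t = (\lambda - \lambda_t)^2$, using $\lambda_1 = 0$ and hence $b_1 = \lambda^2$, and grouping $\theta_t$ in the role of $\sigma$, gives the second claim with the boundary constant $\tfrac{\lambda^2}{2}\delta_\mu$.

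There is no real obstacle here: the only mild subtlety is being careful that the term $-a_{T+1}/(2\eta_T)$ (respectively $-b_{T+1}/(2\mu_T)$) can be discarded — which requires positivity of the step sizes, as is assumed — and that the leftover boundary coefficient $\delta_\eta$ (resp.\ $\delta_\mu$) is indeed the desired $1/\eta_1 - \sigma$ (resp.\ $1/\mu_1 - \theta_1$). Neither condition \textbf{C1} nor \textbf{C3} is actually used to prove the lemma itself; those conditions will later serve to make the right-hand sides small when the lemma is combined with Lemmas~\ref{lmm1} and \ref{lmm2}.
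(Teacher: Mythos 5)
Your proof is correct and follows essentially the same route as the paper's: shift the index on the $a_{t+1}/\eta_t$ (resp.\ $b_{t+1}/\mu_t$) terms, drop the non-positive terminal term, and bound the boundary term via $a_1 = \|\xb\|^2 \leq R^2$ and $b_1 = \lambda^2$ using the initializations $\xb_1 = \zerob$, $\lambda_1 = 0$. Your added remarks (positivity of the step sizes to discard $-a_{T+1}/(2\eta_T)$, and that \textbf{C1}/\textbf{C3} are not needed for the lemma itself) are accurate and merely make explicit what the paper leaves implicit.
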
\vspace{-0.1in}
\begin{proof}
Shifting indices in the sums for terms that depend on $a_{t+1}/\eta_t, b_{t+1}/\mu_t$ and collecting terms that depend on $a_t,b_t$, we then use
$a_1 = \|\xb_1 -\xb \|_2^2 = \|\xb \|_2^2 \leq R^2$ and 
$b_1 = (\lambda - \lambda_1)^2 = \lambda^2$ to conclude.
\end{proof} \vspace{-0.05in} 
We now present the key lemma of the analysis.
\begin{lemma} \label{lmm5} [Cumulative regret bound] Let $\xb^\star = \arg \min_{\xb \in \Xcal} \sum_{t=1}^T f_t(\xb)$ and assume \textbf{C1}, \textbf{C2} and \textbf{C3} hold. 
Define 
${\cal R}_T^f \defin \frac{R^2}{2} \delta_\eta
+
G^2 S_\eta
+
D^2 S_\mu
+
R^2 U_\eta$, 
where we have introduced $S_\eta \defin \sum_{t=1}^T \eta_t$, $S_\mu \defin \sum_{t=1}^T \mu_t$ and $S_\theta \defin \sum_{t=1}^T \theta_t$.
Then, it holds that
\begin{eqnarray*}
&\sum_{t=1}^T \Delta f_t
\leq
{\cal R}_T^f,\\
&\text{  and  }\\
&\sum_{t=1}^T g(\xb_t) 
\leq
\sqrt{ 2(S_\theta + \delta_\mu)
({\cal R}_T^f + FT)}.
\end{eqnarray*}
\end{lemma}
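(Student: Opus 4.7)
The plan is to combine two per-round inequalities analogous to Lemma~\ref{lmm1}, one for the dual step and one for the primal step, and match the resulting upper bound against the lower bound supplied by Lemma~\ref{lmm2}. The $\xb$-analog of Lemma~\ref{lmm1} is derived by the same nonexpansiveness argument applied to the primal update $\xb_{t+1}=\Pi_\Bcal(\xb_t - \eta_t \nabla_\xb \Lcal_t(\xb_t,\lambda_t))$ together with $\sigma$-strong convexity of $\xb\mapsto\Lcal_t(\xb,\lambda_t)$, yielding
\begin{align*}
\Lcal_t(\xb_t,\lambda_t)-\Lcal_t(\xb^\star,\lambda_t) \leq \tfrac{1}{2\eta_t}(a_t-a_{t+1}) - \tfrac{\sigma}{2}a_t + \tfrac{\eta_t}{2}\|\nabla_\xb \Lcal_t(\xb_t,\lambda_t)\|^2.
\end{align*}
Adding this to Lemma~\ref{lmm1} and summing over $t \in \SET{T}$, the left-hand side collapses to $\sum_t [\Lcal_t(\xb_t,\lambda) - \Lcal_t(\xb^\star,\lambda_t)]$, which by Lemma~\ref{lmm2} is bounded below by $\sum_t \Delta f_t + \lambda \sum_t g(\xb_t) - \tfrac{\lambda^2}{2}S_\theta + \tfrac{1}{2}\sum_t \theta_t \lambda_t^2$.

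For the right-hand side I would invoke Lemma~\ref{lmm3}: condition~\textbf{C1} makes the $b_t$-sum non-positive and leaves only $\tfrac{\lambda^2}{2}\delta_\mu$, while $a_t \leq R^2$ together with condition~\textbf{C3} controls the $a_t$-sum by a term proportional to $R^2 U_\eta$, on top of the initial boundary contribution $\tfrac{R^2}{2}\delta_\eta$. The gradient-norm residuals are bounded via triangle inequality as $\|\nabla_\xb \Lcal_t\|^2 \leq 2G^2(1+\lambda_t^2)$ and $[\nabla_\lambda \Lcal_t]^2 = (g(\xb_t)-\theta_t\lambda_t)^2 \leq 2D^2 + 2\theta_t^2\lambda_t^2$. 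This is the step I expect to be the main obstacle, because these bounds generate a $\lambda_t$-dependent residual $\sum_t[\eta_t G^2 + \mu_t\theta_t^2]\lambda_t^2$ that is not directly controllable ($\lambda_t$ is not a priori bounded). The crux is condition~\textbf{C2}, which is designed precisely so that the bracketed coefficient is at most $\tfrac{1}{2}\theta_t$, allowing this residual to be absorbed by the $\tfrac{1}{2}\sum_t \theta_t \lambda_t^2$ gain afforded by strong concavity in Lemma~\ref{lmm2}.

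After these cancellations I arrive at the master inequality
\begin{align*}
\sum_t \Delta f_t + \lambda \sum_t g(\xb_t) - \tfrac{\lambda^2}{2}(S_\theta + \delta_\mu) \leq \Rcal_T^f,
\end{align*}
valid for every $\lambda \geq 0$. Setting $\lambda = 0$ gives the first claim. For the constraint-violation bound, the oscillation hypothesis combined with $f_t\geq 0$ implies $-\Delta f_t \leq F$, hence $-\sum_t \Delta f_t \leq FT$. Plugging this in gives $\lambda \sum_t g(\xb_t) - \tfrac{\lambda^2}{2}(S_\theta + \delta_\mu) \leq \Rcal_T^f + FT$; optimizing the resulting concave quadratic in $\lambda$ at $\lambda^\star = \sum_t g(\xb_t)/(S_\theta+\delta_\mu)$ (with the bound being trivial when $\sum_t g(\xb_t)\leq 0$) produces the stated square-root bound.
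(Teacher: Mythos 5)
Your proposal is correct and follows essentially the same route as the paper: the same decomposition of $\Lcal_t(\xb_t,\lambda)-\Lcal_t(\xb^\star,\lambda_t)$ into dual and primal increments, the same use of Lemmas~\ref{lmm1}--\ref{lmm3} with \textbf{C1}/\textbf{C3} controlling the telescoped sums, \textbf{C2} absorbing the $\lambda_t^2$ residual into the strong-concavity gain, and the same final optimization over $\lambda$ (with the $-FT$ substitution) to extract the two bounds. The only cosmetic difference is that the paper maximizes over $\lambda$ once to obtain $[\sum_t g(\xb_t)]_+^2/(2[S_\theta+\delta_\mu])$ and then specializes, whereas you set $\lambda=0$ and $\lambda=\lambda^\star$ separately, which is equivalent.
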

\begin{proof} By the triangle inequality, we have $\| \nabla_\xb \Lcal_t(\xb,\lambda)\|_2^2 \leq 2 G^2 (1 + \lambda^2)$ and $\big(\nabla_\lambda \Lcal_t(\xb,\lambda) \big)^2 \leq 2 (D^2 + \theta_t^2 \lambda^2)$. We then combine Lemmas \ref{lmm1}-\ref{lmm3}, starting from $ \Lcal_t(\xb_t,\lambda) - \Lcal_t(\xb^*,\lambda_t) =  \Lcal_t(\xb_t,\lambda) - \Lcal_t(\xb_t,\lambda_t) +  \Lcal_t(\xb_t,\lambda_t) - \Lcal_t(\xb^*,\lambda_t)$, yielding\vspace{-0.05in}
\begin{eqnarray*}
&\sum_{t=1}^T \Delta f_t
+\lambda \sum_{t=1}^T g(\xb_t) - \frac{\lambda^2}{2} \bigg[S_\theta +  \delta_\mu \bigg]
\leq\\
&
{\cal R}_T^f
+
 \sum_{t=1}^T \lambda_t^2 \bigg[   \eta_t G^2 + \mu_t \theta_t^2 - \frac{1}{2} \theta_t \bigg].
\end{eqnarray*}
Maximizing the left-hand side with respect to $\lambda \in \Real^+$, we obtain:
\begin{eqnarray*}
&\sum_{t=1}^T \Delta f_t
+\frac{ \big[\sum_{t=1}^T g(\xb_t)\big]_+^2}{2 \big[S_\theta + \delta_\mu \big]}
\leq\\
&{\cal R}_T^f
+
 \sum_{t=1}^T \lambda_t^2 \bigg[   \eta_t G^2 + \mu_t \theta_t^2 - \frac{1}{2} \theta_t \bigg].\vspace{-0.3in}
\end{eqnarray*} The regret bound on the loss is obtained by using \textbf{C2} and $[\sum_{t=1}^T g(\xb_t)]_+^2/(2 [S_\theta + \delta_\mu]) \geq 0$. The bound on constraint violations is obtained as above, but by substituting the lower bound $\sum_t \Delta f_t \geq -FT$.
\end{proof}\vspace{-0.1in}
In order to discuss the scaling of our regret bounds, we state the next simple lemma without proof
\begin{lemma}\label{lmm4} Let $\beta \in (0,1)$. Then
$
\sum_{t=1}^T \frac{1}{t^{\beta}} \leq \frac{T^{1-\beta}}{1-\beta}.
$
\end{lemma}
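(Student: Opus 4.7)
The plan is to prove Lemma \ref{lmm4} by a standard integral comparison, exploiting the fact that the map $s \mapsto s^{-\beta}$ is monotonically decreasing on $(0,\infty)$ whenever $\beta > 0$.

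First I would note that by monotonicity, for every integer $t \geq 1$ and every $s \in [t-1, t]$ we have $s^{-\beta} \geq t^{-\beta}$, which (after integrating over $s \in [t-1,t]$) yields the per-term bound
\begin{equation*}
\frac{1}{t^{\beta}} \;\leq\; \int_{t-1}^{t} \frac{ds}{s^{\beta}}.
\end{equation*}
Summing this inequality over $t = 1, \dots, T$ gives
\begin{equation*}
\sum_{t=1}^{T} \frac{1}{t^{\beta}} \;\leq\; \int_{0}^{T} \frac{ds}{s^{\beta}} \;=\; \frac{T^{1-\beta}}{1-\beta},
\end{equation*}
where the last equality uses $\beta \in (0,1)$, which ensures that the improper integral at $s = 0$ converges and that $1-\beta > 0$.

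There is no real obstacle here: the only subtle point is that the integrand is unbounded at the origin, so one has to check that the improper integral $\int_0^T s^{-\beta} ds$ is finite. This is precisely why the assumption $\beta < 1$ is essential, and it is the assumption already in the hypothesis. The hypothesis $\beta > 0$ is what gives monotonicity of $s \mapsto s^{-\beta}$ and hence the per-interval comparison; without it the inequality would go the other way. No macros beyond those already defined in the paper are needed, and the proof is a single short paragraph in the final write-up.
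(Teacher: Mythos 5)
Your integral-comparison argument is correct: the monotonicity of $s \mapsto s^{-\beta}$ gives the per-term bound $t^{-\beta} \leq \int_{t-1}^{t} s^{-\beta}\,ds$ (including the $t=1$ term, where the improper integral converges precisely because $\beta < 1$), and summing yields the claim. The paper states this lemma without proof, and yours is exactly the standard argument the authors evidently had in mind; nothing is missing.
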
 
With the above lemmas, we now prove Theorem \ref{thm1}:
\begin{proof}[Proof of Theorem 1]
\indent For the proposed choices of $\theta_t,\mu_t$ and $\eta_t$, we can verify that \textbf{C1}, \textbf{C2} and \textbf{C3} hold.
Here we focus on the convex case, the strongly convex one following along the same lines.
First, we can easily see that \textbf{C1} is true as long as $\theta_t$ is non-increasing.
Then, we can notice that, given the choice of $\mu_t$, condition \textbf{C2} is implied by the stronger condition $\eta_t \leq \frac{\theta_t}{6G^2}$ (satisfied by the choice of $\eta_t$ and $\theta_t$ in Table 1). This results in 
$$
S_\mu = \sum_{t=1}^T \frac{1}{\theta_t (t + 1)} \leq \sum_{t=1}^T \frac{t^\beta}{6RG t} \leq  \frac{T^\beta}{6\beta R G}.
$$
$$
S_\eta = \sum_{t=1}^T \eta_t \leq \frac{R}{G} \frac{T^{1-\beta}}{1-\beta}
\,,
\,S_\theta = \sum_{t=1}^T \frac{6RG}{t^\beta} \leq \frac{6RG}{1-\beta} T^{1-\beta},
$$
along with
$1/\mu_1 -   \theta_1 = 6RG$ and $1/\eta_1 -   \sigma = G/R.$
The term $U_\eta$ can be obtained by summing the series $1/\eta_t - 1/\eta_{t-1} = (G/R)(t^\beta - (t-1)^{\beta})$ over $t$, which directly simplifies by telescoping for the $\sigma=0$ case, and is identically equal to zero for $\sigma>0$.
As a result, we obtain from Lemma \ref{lmm5}
$$
\sum_{t=1}^T \Delta f_t \! \leq \!
\Rcal_{T}^f  \!\defin\! 
 \!
\bigg[  RG + \frac{D^2}{6\beta RG} \bigg] T^\beta + \frac{RG}{1-\beta} T^{1-\beta}  + \frac{RG}{2}
\!
$$
and for the constraint
$$
\sum_{t=1}^T g(\xb_t) 
\leq 
\sqrt{
2 (\Rcal_{T}^f+ FT)
\bigg[
\frac{6RG}{1-\beta} T^{1-\beta} + 6RG
\bigg]}.
$$
We obtain the desired conclusion by noticing that for any $T \geq 1$ and $\beta \in (0,1)$, we have $\frac{T^{1-\beta}}{1-\beta} \geq 1$.
\end{proof}

\subsection{Towards No Violation of Constraints}\vspace{-0.1in}
\indent We next show that our extension also applies to the more specific setting developed in Section 3.2 from \cite{Mahdavi2012}, where additional assumptions on the gradient of $g$ can translate into no constraint violations. 
For the sake of self-containedness, we briefly recall it.
Assume that there exist $\gamma \geq 0$ and $r > 0$ such that the variations of $g$ are lower bounded as
\begin{equation}\label{eq:lower_bound_gradient}
\min_{\xb \in \Real^d: g(\xb) + \gamma = 0} \|  \nabla g(\xb)  \|_2 \geq r.
\end{equation}
Denote $\Xcal_\gamma \defin \{\xb \in \Real^d : g(\xb) + \gamma \leq 0 \} \subset \Xcal$.
It can then be shown that (see Theorem 7 in \cite{Mahdavi2012}):
\begin{eqnarray}\label{eq:gamma_approx}
&\Big|  \sum_{t=1}^T f_t(\xb^\star) -   \sum_{t=1}^T f_t(\xb_\gamma)    \Big| \leq \frac{G}{r} T \gamma,
\end{eqnarray}
where $\xb^\star$ and $\xb_\gamma$ are solutions of $\min_{\xb \in \Xcal}  \sum_{t=1}^T f_t(\xb)$ and $\min_{\xb \in \Xcal_\gamma}  \sum_{t=1}^T f_t(\xb)$ respectively. In words, the gap between the optimal value of the original optimization problem and that of the problem over $\Xcal_\gamma$ is well-controlled as a function of $(\gamma, r)$.
Examples where~(\ref{eq:lower_bound_gradient}) holds include the positive semi-definite cone, as described in Section 4 of~\cite{Mahdavi2012a}. For space limitation reasons, we state our result in a simplified form, only briefly sketching its proof that follows the same logic as that of the previous section.
\begin{corollary} Assume~(\ref{eq:lower_bound_gradient}) holds.
Consider the convex case  ($\sigma = 0$) and some instantiations of the sequences $\mu_t, \eta_t$ and $\theta_t$ for some $\beta \in (0,1)$, differing from Table~\ref{tab:params} only up to numerical constants.
There exist $c_0$ and $c_1$ depending on $\{D,G,R,F,r\}$ such that setting $\gamma \defin c_1 T^{-\beta/2}$, we have for any $T \geq c_0$
\begin{eqnarray*}
&\sum_{t=1}^T \Delta f_t \leq \Ocal(\max\{ T^{\beta},T^{1-\beta}, T^{1 - \beta/2}\}),
\end{eqnarray*}
and no constraint violations $\sum_{t=1}^T g(\xb_t)  \leq 0.$
\end{corollary}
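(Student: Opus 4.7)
The approach is to run the algorithm of Section~\ref{Sect: Saddle} on a \emph{tightened} version of the constraint. Concretely, I would replace each $g_j$ by $g_j^{(\gamma)}(\xb) \defin g_j(\xb) + \gamma$, which changes neither the (sub-)gradient bound (still $G$) nor convexity, and increases the value bound only from $D$ to $D + \gamma$. For $\gamma \leq 1$ (ensured by the lower bound $T \geq c_0$ introduced below), this only alters the constants in Table~\ref{tab:params} by absolute numerical factors, yielding the parameter schedule referred to in the statement. The aggregated constraint becomes $g^{(\gamma)}(\xb) = g(\xb) + \gamma$, the Lagrangian becomes $\Lcal_t^{(\gamma)}(\xb,\lambda) = f_t(\xb) + \lambda g^{(\gamma)}(\xb) - \frac{\theta_t}{2}\lambda^2$, and the natural comparator is $\xb_\gamma = \arg\min_{\xb \in \Xcal_\gamma}\sum_t f_t(\xb)$, which exists whenever $\Xcal_\gamma$ is non-empty.

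Applying Theorem~\ref{thm1} to this modified problem would yield
\[
\sum_{t=1}^T \big[ f_t(\xb_t) - f_t(\xb_\gamma) \big] \leq \Rcal_T^f
\quad\text{and}\quad
\sum_{t=1}^T g^{(\gamma)}(\xb_t) \leq C\, T^{1-\beta/2},
\]
where $\Rcal_T^f = \Ocal(\max\{T^\beta, T^{1-\beta}\})$ and $C$ is an explicit constant depending on $\{D, G, R, F\}$; the rate $T^{1-\beta/2}$ comes from the fact that $\Rcal_T^f + FT = \Theta(T)$, so the square-root bound of Theorem~\ref{thm1} is $\Theta(\sqrt{T \cdot T^{1-\beta}})$. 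Rearranging then gives $\sum_t g(\xb_t) = \sum_t g^{(\gamma)}(\xb_t) - \gamma T \leq C T^{1-\beta/2} - \gamma T$, which becomes non-positive precisely when $\gamma \geq C T^{-\beta/2}$. Thus the choice $\gamma \defin c_1 T^{-\beta/2}$ with $c_1 \defin C$ eliminates all cumulative constraint violations.

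For the loss regret, I would insert $\xb_\gamma$ between $\xb_t$ and $\xb^\star$ and invoke the approximation bound~(\ref{eq:gamma_approx}):
\[
\sum_{t=1}^T \Delta f_t
\leq \Rcal_T^f + \Big| \sum_{t=1}^T \big[ f_t(\xb_\gamma) - f_t(\xb^\star) \big] \Big|
\leq \Rcal_T^f + \frac{G}{r}\, T \gamma
= \Ocal\big(\max\{T^\beta, T^{1-\beta}, T^{1-\beta/2}\}\big),
\]
since $T\gamma = c_1 T^{1-\beta/2}$. The main technical point to watch is the existence of $\xb_\gamma$: the set $\Xcal_\gamma$ is non-empty only when $\gamma$ is smaller than the Slater-type margin of $\Xcal$ guaranteed by~(\ref{eq:lower_bound_gradient}). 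Since $\gamma = c_1 T^{-\beta/2}$ decreases in $T$, this holds as soon as $T$ exceeds a threshold $c_0$ depending on $\{D, G, R, F, r\}$ and $\beta$, which is also chosen large enough to force $\gamma \leq 1$ so that the rescaled sequences $\theta_t, \eta_t, \mu_t$ stay within constant factors of Table~\ref{tab:params}. Verifying that \textbf{C1}, \textbf{C2}, \textbf{C3} continue to hold with the perturbed $D$ is then a direct repetition of the computation carried out at the end of Section~\ref{sec:analysis}.
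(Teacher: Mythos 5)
Your proposal is correct and follows essentially the same route as the paper's own sketch: run the analysis on the tightened constraint $g_\gamma(\xb) = g(\xb)+\gamma$ (so $D$ becomes $D+\gamma$ and the parameters of Table~\ref{tab:params} change only by numerical constants), compare against $\xb_\gamma$ and transfer back to $\xb^\star$ via~\eqref{eq:gamma_approx} at cost $G T\gamma / r$, then pick $\gamma = c_1 T^{-\beta/2}$ to cancel the $\Ocal(T^{1-\beta/2})$ violation bound. The only cosmetic difference is in how the threshold $c_0$ is motivated (you emphasize non-emptiness of $\Xcal_\gamma$ and $\gamma \leq 1$, while the paper emphasizes absorbing the extra $3\gamma^2 S_\mu/2$ term into $\Rcal_T^f$), which does not alter the argument.
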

\begin{proof}[Sketch of proof]
We can apply the same analysis as before to the function $g_\gamma(\xb) \defin g(\xb) + \gamma$, replacing $D$ by $D+\gamma$ and adapting the constants in both \textbf{C2} (i.e., $\eta_t G^2 + \frac{3}{2}\mu_t \theta_t^2 - \frac{1}{2} \theta_t$) as well as for the instantiations of $\mu_t,\eta_t$ and $\theta_t$. The regret bound on $\sum_{t=1}^T \Delta f_t$ is identical  as earlier, with additional additive terms $3\gamma^2 S_\mu/2$ and $ G T \gamma/r$ introduced as a result of ~(\ref{eq:gamma_approx}). As for $\sum_{t=1}^T g(\xb_t)$, the term $[\sum_{t=1}^T g(\xb_t)]_+^2$ becomes here $[\sum_{t=1}^T g(\xb_t) + \gamma T]_+^2$, which in turn leads to the same regret bound as previously stated, minus the contribution $-\gamma T$. We cancel out the constraint violations---scaling in $\Ocal(T^{1-\beta/2})$ according to Theorem~\ref{thm1}---by setting $\gamma = c_1 T^{-\beta/2}$.
Note that $c_0$ is determined by examining when the extra term $3\gamma^2 S_\mu/2$ can be upper bounded by those in $\Rcal_{T}^f$.
\end{proof}
The regret bound presented in Corollary 1 is minimized for $\beta = 2/3$, leading to a regret of $\Ocal(T^{2/3})$ with no constraint violations. This result extends Theorem 8 and Corollary 13 from~\cite{Mahdavi2012} in that it holds for general convex domains $\Xcal$ (as opposed to only polyhedral ones).   \vspace{-0.1in}
\section{Experiments}\vspace{-0.1in}
\indent We ran two sets of experiments to validate the regret bounds obtained for our adaptive algorithms for OCO with long-term constraints and compare to the algorithms proposed in~\cite{Mahdavi2012}. First, we examine the online estimation of doubly-stochastic matrices where the convex domain of interest $\Xcal$ is polyhedral but whose projection operator is difficult to compute~\cite{Helmbold2009,Fogel2013}. Second, we consider sparse online binary classification based on the elastic net penalty~\cite{Zou2005}.% task for whichwith some publicly-available datasets that are commonly-used as benchmarks for this context.

\indent We shall refer to our adaptive online gradient descent (A-OGD) for convex $f_t$ (i.e., $\sigma = 0$) as \texttt{Convex~A-OGD} and for strongly convex  $f_t$ (i.e., $\sigma > 0$) as \texttt{Strongly~convex~A-OGD}, which enjoy the same regret guarantees of $\Ocal(T^{2/3})$ for the loss and constraint. The method of~\cite{Mahdavi2012} that handles general convex domains $\Xcal$ will be referred to as \texttt{Convex~OGD}, while the mirror prox method analyzed in~\cite{Mahdavi2012}, which is only applicable to polyhedral domains, will be denoted by \texttt{Convex~mirror~prox}. The parameters of \texttt{Convex~OGD}  and \texttt{Convex~mirror~prox} are instantiated according to~\cite{Mahdavi2012}.

\indent Although we generate the sequence of losses $\{f_t\}_{t=1}^T$ stochastically in the experiments, we would like to emphasize that the regret bounds we obtain are also valid for adversarially-generated sequences, so that it is not required that $\{f_t\}_{t=1}^T$ are generated in i.i.d. fashion. \vspace{-0.1in}

\subsection{Doubly-Stochastic Matrices}\vspace{-0.1in}
\indent Doubly-stochastic matrices appear in many machine learning and optimization problems, such as clustering applications~\cite{Zass2006} or learning permutations~\cite{Helmbold2009,Fogel2013}. Briefly, for a sequence of matrices $\{\Yb_t\}_{t=1}^T$ in $\Real^{p\times p}$, we solve the following optimization problem in an online fashion:
\begin{eqnarray}\label{eq:dsm}
\min_{\Xb \in \Real^{p\times p}} \sum_{t=1}^T \frac{1}{2} \|  \Yb_t  - \Xb \|_\fro^2
\end{eqnarray}
subject to the (linear) convex constraints
$$
 \Xb \geq \zerob,\ \
 \Xb \oneb = \oneb \text{ and } \Xb^\top \oneb = \oneb.
$$
This problem can easily be mapped to OCO setting by assuming that the sequence $\{\mathbf{Y}\}_{t=1}^T$ is generated by random permutation matrices which are known to constitute the extreme points of the set of doubly-stochastic matrices~\cite{Birkhoff1946}. We have $d=p^2$, $f_t(\Xb) =  \frac{1}{2} \|  \Yb_t  - \Xb \|_\fro^2$ and $m = p^2 + 4p$ to describe all the linear constraints, more specifically there are $p^2$ non-negativity constraints, along with $4p$ inequalities to model the $2p$ equality constraints. This leads to the following instantiations of the parameters controlling $f_t$ and $g$: $R = \sqrt{p}$, $G=2R$ and $D=R$. Note that we can apply a) \texttt{Strongly~convex~A-OGD} (since $f_t$ is by construction strongly convex with parameter $\sigma = 1$), and b) \texttt{Convex~mirror~prox} since $\Xcal$ is polyhedral.

\indent The cumulative regret for the loss and the long-term constraint are shown in Figures~\ref{fig:ft_dsm} and~\ref{fig:gt_dsm}. They are computed over $T=1000$ iterations with $d = 64$, and are averaged over 10 random sequences $\{\Yb_t\}_{t=1}^T$ (the standard deviations are not shown since they are negligible). The offline solutions of~(\ref{eq:dsm}) required for various $t \in \SET{T}$ to compute the regret are obtained using \texttt{CVXPY}~\cite{Diamond2014}.
\begin{figure}[t]
  \centering
  \includegraphics[width=0.45\textwidth]{./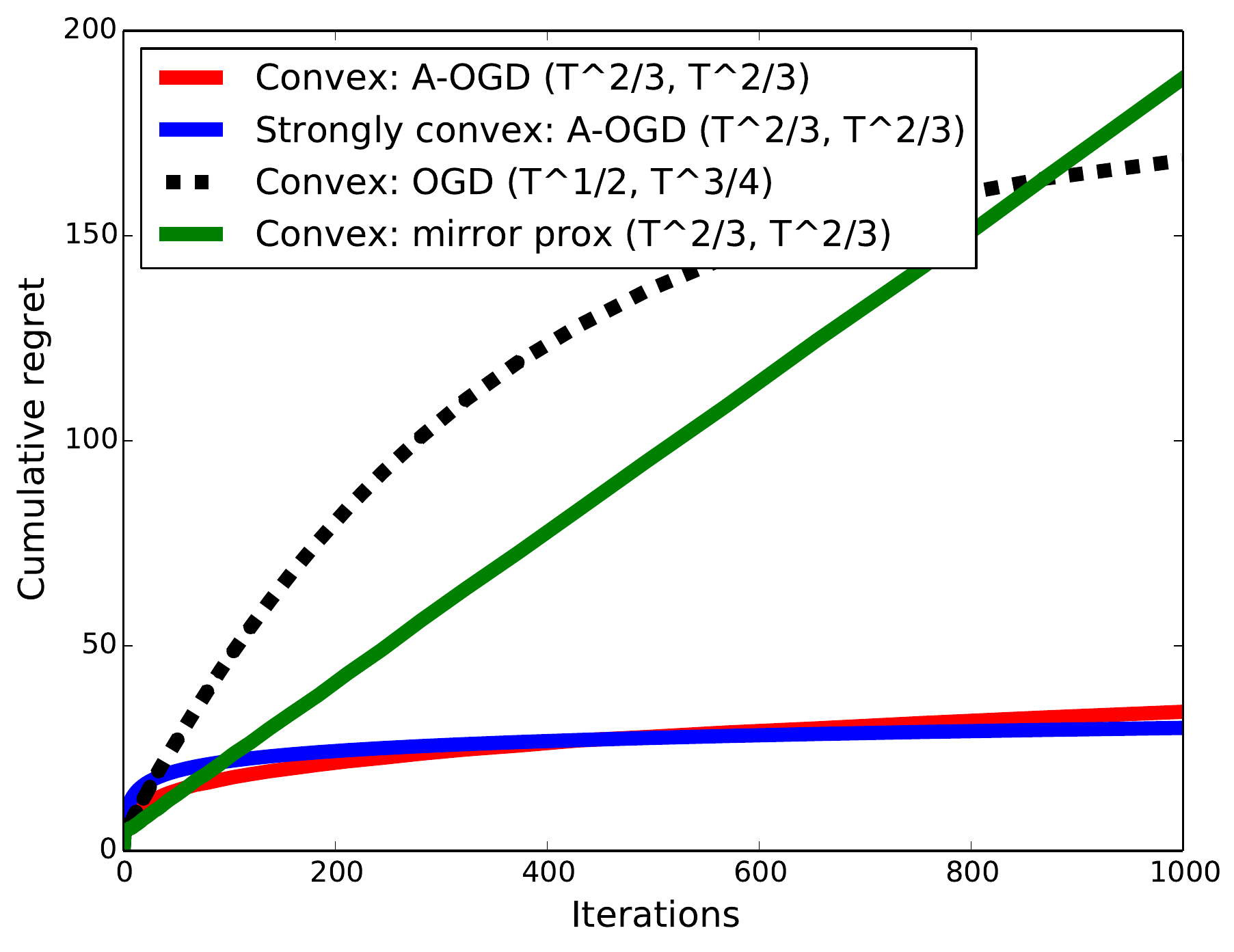}
    \caption{Cumulative regret of the loss function for the online estimation of doubly-stochastic matrices. We display the mean computed over 10 random sequences $\{\Yb_t\}_{t=1}^T$.~(Best viewed in colour.)}
    \label{fig:ft_dsm}
\end{figure}
\begin{figure}[t]
  \centering
   \includegraphics[width=0.45\textwidth]{./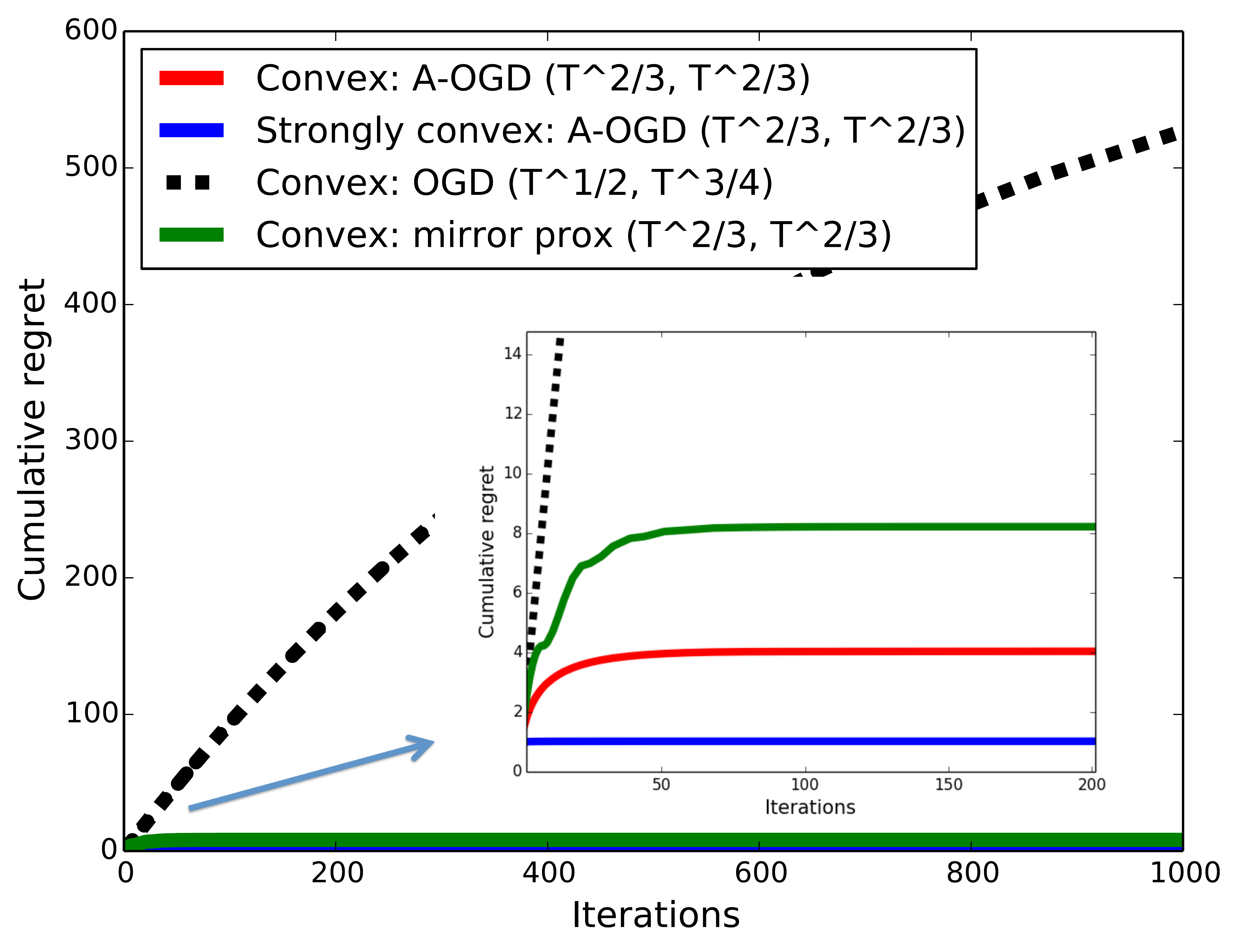}
    \caption{Cumulative regret of the long-term constraint for the estimation of doubly-stochastic matrices. We display the mean computed over 10 random sequences $\{\Yb_t\}_{t=1}^T$. The embedded graph is a zoom of the area of interest in the original figure.~(Best viewed in colour.)}
    \label{fig:gt_dsm}
\end{figure}

\indent The results shown in Figure~\ref{fig:ft_dsm} and~\ref{fig:gt_dsm} indicate that although the cumulative regret bounds for \texttt{Strongly~convex~A-OGD} were not demonstrated to be tighter in our analysis than those for \texttt{Convex~A-OGD}, they achieve a better cumulative regret for this problem, especially with respect to the long-term constraint. Also, while \texttt{Convex~mirror~prox} and \texttt{Convex~A-OGD} should theoretically exhibit the same behavior, the results suggest that mirror prox is not able to decrease cumulative regret at the same rate as our proposed method. We surmise that this may be due to the fact that the guarantees of \texttt{Convex~mirror~prox} proved in~\cite{Mahdavi2012} only hold for very large $T$.\footnote{Theorem 12 from~\cite{Mahdavi2012} requires $T \geq 164(m+1)^2$, which translates into $T > 10^7$ in our setting.}
\vspace{-0.1in}
\subsection{Sparse Online Binary Classification}\vspace{-0.1in}
\indent Next, we examine the application of sparse online binary classification. Our goal will be to minimize the log-loss subject to a constrained elastic-net penalty:\footnote{Constraint formulations with sparsity-inducing terms are sometimes preferred over their penalized counterparts when they express some concrete physical budget, e.g., in the context of learning predictors with low-latency~\cite{Xu2012}.}
\begin{eqnarray}\label{eq:binaryclassification}
\min_{\substack{\xb \in \Real^{d}: \|\xb\|_1 + \frac{1}{2}\|\xb\|_2^2 \leq \rho}} \sum_{t=1}^T \log( 1 + e^{ -y_t \xb^\top \ub_t } ),
\end{eqnarray}
where $\{y_t,\ub_t\}_{t=1}^T$ denotes a sequence of label/feature-vector pairs and $\rho > 0$ is a parameter that measures the degree of the sparsity of the solutions of~(\ref{eq:binaryclassification}).

\indent This problem is mapped to our formulation by setting $f_t(\xb) =  \log( 1 + e^{ -y_t \xb^\top \ub_t } )$ with 
$m = 1$, 
$g(\xb) = \|\xb\|_1 + \frac{1}{2}\|\xb\|_2^2 - \rho$,
$R = \sqrt{1+2\rho}-1$,\footnote{The value for $R$ is found by noticing that $\|\xb\|_1 + \frac{1}{2}\|\xb\|_2^2 \geq \|\xb\|_2 + \frac{1}{2}\|\xb\|_2^2$ and solving the resulting second-order polynomial inequality.} 
$G = \max\{\sqrt{d}+R,\max_{t} \|\ub_t\|_2\}$ and 
$D = \sqrt{d} R + R^2/2$. 
The sequences $\{y_t,\ub_t\}_{t=1}^T$ are generated by drawing pairs at random with replacement.

\indent We solve the above problem using the datasets \texttt{ijcnn1} and  \texttt{covtype}, consisting respectively of $49,990$ and $581,012$ samples of dimension $d=22$ and $d=54$ each\footnote{\scriptsize{\texttt{www.csie.ntu.edu.tw/~cjlin/libsvmtools/datasets/binary.html}}}.  The parameter $\rho$ is set to obtain approximately $30\%$ of non-zero variables. Moreover, and in order to best display cumulative regret, we compute offline solutions of~(\ref{eq:binaryclassification}) for various $t\in\SET{T}$ thanks to an implementation of~\cite{Defazio2014}.

\indent The results are summarized in Figures~\ref{fig:ft_bc_ijcnn1_covtype} and \ref{fig:gt_bc_ijcnn1_covtype} and represent an average over 10 random sequences $\{y_t,\ub_t\}_{t=1}^T$ (the standard deviations are not shown since they are negligible). The number of iterations $T$ is equal to the number of samples in each dataset. 
\begin{figure}[t]
  \centering
  \hspace*{-0.1cm}\includegraphics[width=0.5\textwidth]{./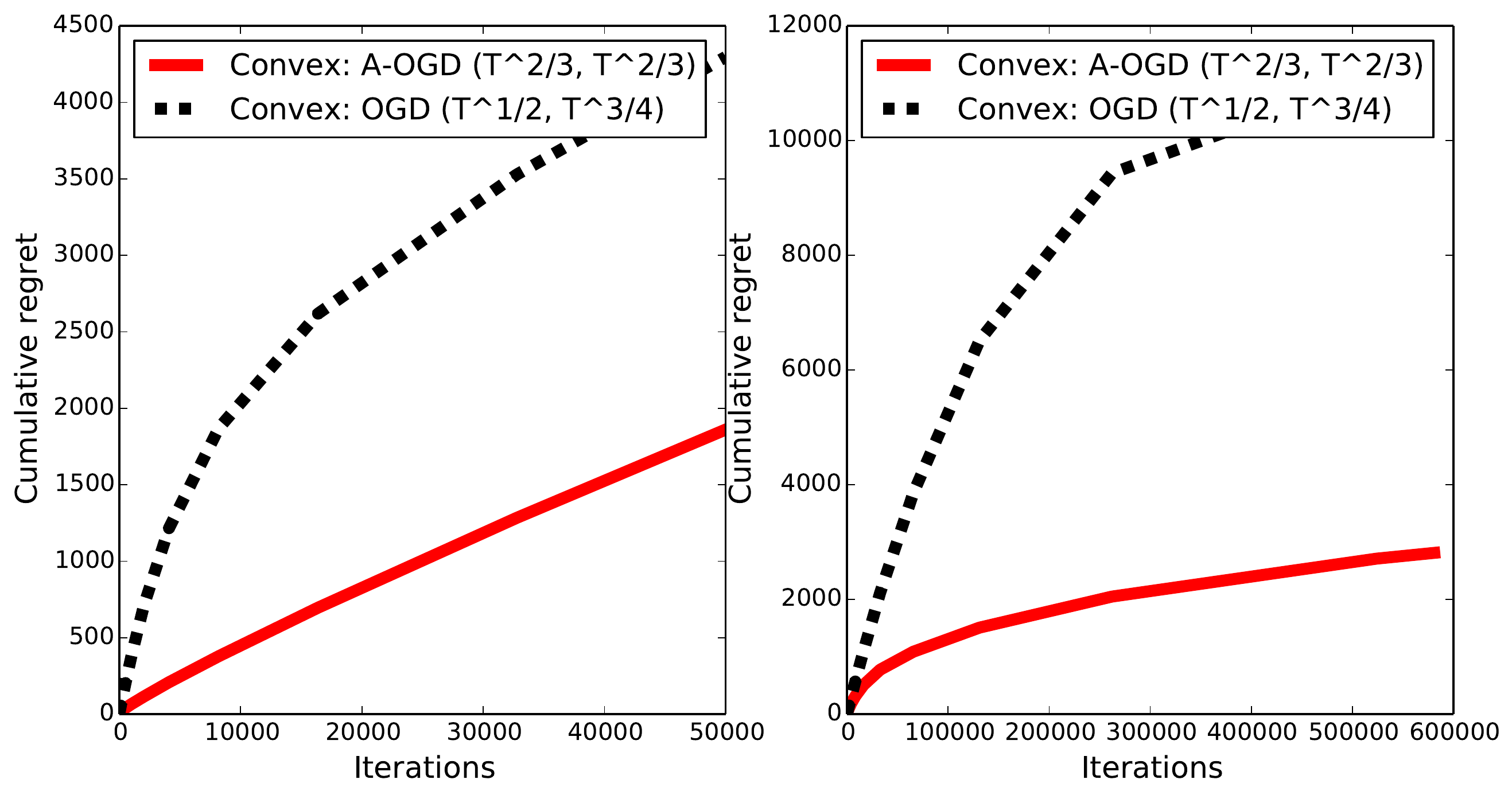}
    \caption{Cumulative regret of the loss function for the sparse online binary classification (left:~\texttt{ijcnn1}, right:~\texttt{covtype}). We display the mean computed over 10 random sequences $\{y_t,\ub_t\}_{t=1}^T$.}\vspace{-0.1in}
    \label{fig:ft_bc_ijcnn1_covtype}
\end{figure}
\indent Interestingly, we observe that the constraint is not violated on average (i.e., via a negative cumulative regret) and the iterates $\xb_t$ remain feasible within the domain $\|\xb\|_1 + \frac{1}{2}\|\xb\|_2^2 \leq \rho$. This tendency is more pronounced for \texttt{Convex~OGD}  since a closer inspection of the sequence $\{\eta_t\}_{t=1}^T$ shows numerical values smaller than those of our approach \texttt{Convex~A-OGD} (by 2 to 3 orders of magnitude). As a result, starting from $\xb_1 = \zerob$, we found that the iterates generated by \texttt{Convex~OGD}  do not approach the boundary of the domain, hence increasing regret on cumulative loss. We also note that the offline solutions of~(\ref{eq:binaryclassification}) always saturate the constraint. Although our analysis predicts that the cumulative regret of \texttt{Convex~OGD} associated to the loss (i.e., $\Ocal(T^{1/2})$) should be smaller than that associated to \texttt{Convex~A-OGD} (i.e., $\Ocal(T^{2/3})$), \texttt{Convex~A-OGD} achieves here a lower cumulative regret. This observation may be explained by the same argument as that described previously, namely that the larger step sizes $\{\eta_t\}_{t=1}^T$ of \texttt{Convex~A-OGD} enables us to make faster progress.
\begin{figure}[t]
  \centering
  \hspace*{-0.1cm}\includegraphics[width=0.5\textwidth]{./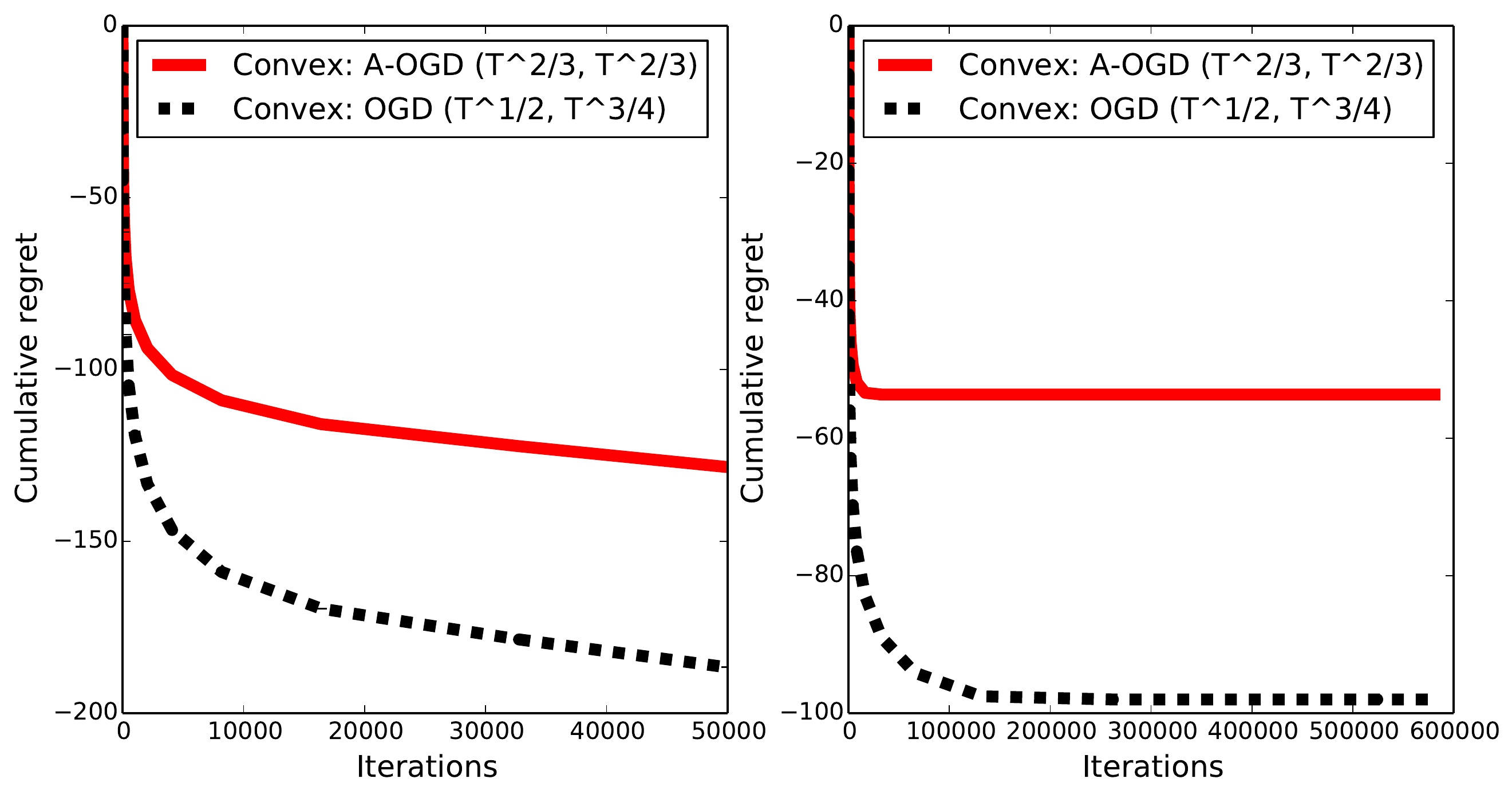}
    \caption{Cumulative regret of the constraint for the sparse online binary classification (left:~\texttt{ijcnn1}, right:~\texttt{covtype}). We display the mean computed over 10 random sequences $\{y_t,\ub_t\}_{t=1}^T$.}\vspace{-0.1in}
    \label{fig:gt_bc_ijcnn1_covtype}
\end{figure}\vspace{-0.1in}
 \section{Discussion\vspace{-0.1in}}\label{sec:discussion}
In this section, we discuss several generalizations.\vspace{-0.1in}

\paragraph{Broader families of step sizes.}
We have assumed that the updates of the primal variable $\xb$ are driven by a projected gradient step controlled through a single step size $\eta_t$. Following the ideas developed in~\cite{McMahan2010, Duchi2011a}, we could analyze the regret guarantees of our algorithm when there is a vector of step sizes $\boldsymbol{\eta}_t$ that is given by a diagonal matrix $\Diag(\etab_t) \in \Real^{d\times d}$, updating adaptively and separately each coordinate of $\xb$. 

\paragraph{Can we identify a better penalty?}
In the light of (\ref{eq:squared_penalty}), it is tempting to ask whether we can find a penalty function that will lead to lower cumulative regret guarantees. To this end, we could for example introduce a smooth, $1$-strongly-convex function $\phi$ with domain $\Omega$. The saddle-point formulation of the new problem is then given by
\begin{equation*}\label{eq:lagrangianGeneral}
\Lcal_t(\xb,\lambda) \defin 
f_t(\xb) + \lambda g(\xb) - \theta_t \phi(\lambda),
\end{equation*}
where $\{\theta_t\}_{t=1}^T$ is, as earlier, a sequence of non-negative numbers to be specified subsequently for any $\lambda \in \Omega, \xb \in \Bcal$. Interestingly, it can be shown that condition \textbf{C2} becomes a first-order nonlinear ordinary differential inequality in this setting, leading to
$$
\eta_t G^2 \lambda^2  + \mu_t \theta_t^2 \left[ \frac{d\phi}{d\lambda} \right]^2 - \theta_t \phi(\lambda) \leq 0, \text{  for all } \lambda \in \Omega.
$$
Hence, the above differential inequality suggests a family of penalty functions which we could use. In particular, we see that $\phi$ must grow at least quadratically and stay greater than its squared first derivative, which rules out a softmax penalty like $\lambda \mapsto \log(1+e^\lambda)$. Moreover, the maximization with respect to $\lambda$ in the last step of Lemma \ref{lmm5} introduces the Moreau envelope \cite{Lemarechal1997} of the Fenchel conjugate of $\phi$, namely
$$
\phi^*_{S_{\theta}}(u) \defin \sup_{\lambda \in \Omega} \Big[   \lambda u- S_\theta  \phi(\lambda) - \delta_\mu \frac{\lambda^2}{2} \Big].
$$
The goal is then to find a feasible penalty $\phi$ of which the inverse mapping $u \mapsto [\phi^*_{S_{\theta}}]^{-1}(u)$ would minimize the regret bound. 
For instance, the inverse mapping would scale as $u \mapsto \sqrt{u}$ when using the squared $\ell_2$ norm over $\Omega = \Real^+$. We defer to future work the study of this admissible family of penalties.%,

\paragraph{Which enclosing set for $\Xcal$?}
Our current analysis relies on the idea that instead of having to perform a projection on $\Xcal$ in each update (which could be computationally costly and perhaps intractable in some cases), we restrict the iterates $\xb_t$ to remain within a simpler convex set $\Bcal \supseteq \Xcal$. While we assumed de facto an Euclidean ball for $\Bcal$, we could consider sets enclosing $\Xcal$ more tightly, while preserving the appealing computational properties. Having a principled methodology to choose $\Bcal$ and carefully assessing its impact on the regret bounds is an interesting avenue for future research.  

\bibliographystyle{plain}
\bibliography{MainBibliography}

\end{document}